\documentclass[letterpaper, 10 pt, conference]{ieeeconf}
\usepackage{lmodern}
\IEEEoverridecommandlockouts
\title{\LARGE \bf
Residual Learning-Based Control of Vehicle Platoons with $\ell_2$ Stability Guarantees via Recurrent Equilibrium Networks
}

\author{Brian Delgado, Anh-Tu Nguyen, \textit{Senior Member, IEEE}, and Hamid Taghavifar, \textit{Senior Member, IEEE}%
\thanks{Brian Delgado and Hamid Taghavifar are with Concordia University, Montreal, QC, Canada
(e-mail: {\tt\small \{brian.delgado,hamid.taghavifar\}@concordia.ca}).}%
\thanks{Anh-Tu Nguyen is with the Research Center LAMIH UMR CNRS 8201,
Université Polytechnique Hauts-de-France, and also with INSA Hauts-de-France,
Valenciennes, France (e-mail: {\tt\small tnguyen@uphf.fr}).}%
\thanks{\textcolor{blue}{This work has been submitted to the IEEE for possible publication.
Copyright may be transferred without notice, after which this version may no longer be accessible.}}%
}
 
\usepackage{amsmath}   
\usepackage{amssymb}
\usepackage{float}
\usepackage{amsfonts,mathtools}
\usepackage{bm}
\usepackage[T1]{fontenc}
\usepackage[hidelinks]{hyperref}
 \usepackage{xcolor}

\newtheorem{theorem}{Theorem}
\newtheorem{definition}{Definition}
\newtheorem{proposition}{Proposition}
 
\usepackage{xcolor}

\newcommand{\normtwo}[1]{\lVert #1 \rVert_{2}}
\newcommand{\normld}[1]{\lVert #1 \rVert_{\ell_2}}
\newcommand{\normH}[1]{\lVert #1 \rVert_{\mathcal H_\infty}}

\usepackage{graphicx}
\graphicspath{{figures/}}

\begin{document}

\maketitle
\thispagestyle{empty}
\pagestyle{empty}

\begin{abstract}
This paper proposes a residual learning-based control framework for heterogeneous vehicle platoons subject to parametric uncertainty and external disturbances. A nominal controller designed via Linear Matrix Inequalities (LMIs), along with disturbance-observer compensation, is enhanced by a Recurrent Equilibrium Network (REN) trained offline using stored trajectories and nominal-model prediction errors. The REN is constrained to satisfy a prescribed \(\ell_2\)-gain bound, enabling sufficient small-gain conditions for local closed-loop stability and disturbance string stability. Experiments demonstrate reduced spacing and velocity errors relative to the nominal controller.
\end{abstract}
\section{Introduction}
Vehicle platooning can improve road capacity, safety, traffic flow, and energy efficiency \cite{Gaagai2023}. Under a predecessor–follower (PF) communication topology, each follower relies only on information from its predecessor, which reduces the communication burden but makes string stability more challenging to achieve \cite{Mahfouz2023}. String stability \cite{Feng2019} prevents disturbance amplification along the vehicle string. To guarantee string stability, various platoon control strategies have been proposed, including $H_\infty$ control \cite{Ploeg2014}, adaptive control \cite{Li2023}, and model predictive control \cite{Luo20212}.

Beyond closed-loop stability, many applications also require satisfactory performance. 
Recent studies have explored Reinforcement Learning (RL) controllers to improve performance \cite{Furieri2024}, typically using deep neural networks (DNNs) to approximate complex nonlinear mappings \cite{Degrave2022}. Although RL and deep learning offer significant potential for improving the performance of nonlinear control systems, their adoption in safety-critical applications remains limited by the difficulty of establishing formal stability and robustness guarantees. This limitation is particularly important in autonomous mobility systems, where safety and reliable operation are essential requirements.

Recent approaches to learning provably stabilizing DNN controllers can be broadly divided into two groups. The first consists of constrained optimization approaches \cite{Min2023}, which enforce Lyapunov-like inequalities during training. However, these constraints can be overly conservative, thereby restricting the set of admissible policies. Moreover, enforcing them during optimization can be computationally expensive, which may limit practical applicability. The second group addresses stability by using classes of control policies with built-in stability guarantees \cite{Furieri2024}, for which stability follows directly from the policy structure.

Motivated by these results, this paper addresses the design of a neural-network-enhanced controller for vehicle platooning under a PF communication topology. Following the second class of approaches, we exploit the recently developed properties of Recurrent Equilibrium Networks (RENs) \cite{Revay2024} to parameterize stable controllers and obtain controllers that are stable by design. The main contributions of this work are summarized as follows:
(i) a residual-learning-based platoon controller that combines a nominal controller, a disturbance observer, and an REN-based compensation policy;
(ii) an offline-trained REN that improves performance in the presence of model mismatch while preserving a prescribed $\ell_2$-gain bound; and
(iii) sufficient small-gain conditions that guarantee individual-vehicle stability and string stability of the interconnected platoon.

\noindent 
\textit{Notation.}
For a square matrix $X$, $X\succ0$ denotes positive definiteness. The symbol $*$ stands for matrix blocks that can be deduced by symmetry. $\|\cdot\|_2$ denotes the Euclidean norm for vectors and the spectral
norm for matrices, while for a discrete-time sequence
$x=\{x_k\}_{k\geq0}$,
$
\|x\|_{\ell_2}:=\left(\sum_{k=0}^{\infty}\|x_k\|_2^2\right)^{1/2}.
$
For a stable transfer matrix $G$, $\|G\|_{\mathcal H_\infty}$ denotes
its induced $\ell_2$ gain. $\operatorname{blkdiag}(\cdot)$ denotes
block-diagonal concatenation.

\section{Problem Formulation}

This section formalizes the platooning control problem.

\subsection{Vehicle Dynamics and Distance Policy}
Consider a platoon of $N+1$ vehicles traveling with the PF communication topology, with the leader indexed by $i=0$. The $i$th following vehicle dynamics are given by \cite{Luo20212}:
\begin{equation}
\begin{aligned}
  p_{i,k+1}  &= p_{i,k} + v_{i,k}T \\
  v_{i,k+1} &= v_{i,k}+ a_{i,k}T \\
  a_{i,k+1} &= a_{i,k} + f_i(v_{i,k},a_{i,k}) +  B_iu_{i,k} + D_iF_{r,i,k}
\end{aligned}
\label{vehi_dyn}
\end{equation}
with
\[
\begin{aligned}
&B_i = \frac{T}{m_i \tau_i},~~ D_i = \frac{g T}{\tau_i}\\
&f_i(v_{i,k},a_{i,k}) = -\frac{T}{ \tau_i}a_{i,k} -\frac{Tc_i}{m_i \tau_i} 
    \left(v_{i,k}^2 + 2\tau_i v_{i,k}a_{i,k}\right)
\end{aligned}
\]
where $p_{i,k}$, $v_{i,k}$ and $a_{i,k}$ denote the position, velocity and acceleration of the $i$th vehicle, respectively, $\tau_i$ represents the inertial delay time constant, $m_i$ the vehicle mass, $c_i$ the aerodynamic drag coefficient and T the sampling period, $u_{i,k}$ is the control input representing the desired longitudinal force, and $F_{r,i,k}$ is an unknown disturbance. 
The desired inter-vehicle distance is specified by the spacing policy $d_{r,i}= r_i+hv_{i,k}$ \cite{Silva2025}, where $d_{r,i}$ denotes the desired inter-vehicle distance, $r_i$ is a constant representing the standstill distance and the $i$th vehicle length, $h$ is the time gap, and $v_{i,k}$ is the velocity of the $i$th vehicle. Based on the vehicle dynamics \eqref{vehi_dyn} and the spacing policy, the spacing policy error between two consecutive vehicles is defined as the difference between the actual inter-vehicle distance and the desired distance as
\begin{equation}
    \Delta d_{i,k} = p_{i-1,k} - p_{i,k} - d_{r,i} = \Delta p_{i,k} - d_{r,i}
    \label{distance_error}
\end{equation}
\noindent where $\Delta p_{i,k}$ denotes the relative distance between vehicles.

\subsection{Disturbance Observer Compensation}

The vehicle dynamics in \eqref{vehi_dyn} may be affected by parametric uncertainties and disturbances. Since only nominal vehicle parameters are assumed to be available for control design, the following nominal disturbance-free model is formulated
\begin{equation}
    a_{n,i,k+1} = a_{i,k} + f_{n,i}(v_{i,k},a_{i,k}) + B_{n,i}u_{i,k}
    \label{nom_eq}
\end{equation}
\noindent where  $f_{n,i}(v_{i,k},a_{i,k})$ is computed using $f_{i}(v_{i,k},a_{i,k})$ with the nominal vehicle parameters. Assuming bounded parametric uncertainties, the real vehicle dynamics \eqref{vehi_dyn} can be rewritten in terms of the nominal model \eqref{nom_eq} as
\begin{equation}
    a_{i,k+1} = a_{i,k} + f_{n,i}(v_{i,k},a_{i,k}) + B_{n,i}u_{i,k} + \delta_{i,k}
    \label{nom_m}
\end{equation}
\noindent where $\delta_{i,k}$ is the virtual disturbance, defined as $
    \delta_{i,k} = f_i(v_{i,k},a_{i,k}) - f_{n,i}(v_{i,k},a_{i,k}) + (B_i - B_{n,i})u_{i,k} + D_iF_{r,i,k}$.
Based on \eqref{nom_m}, the  following disturbance observer is introduced to estimate and compensate for the virtual disturbance:
\begin{equation}
\begin{aligned}
    \varsigma_{i,k+1} &= \varsigma_{i,k} + l_d \left( f_{n,i}(v_{i,k},a_{i,k}) + B_{n,i} u_{i,k} + \hat{\delta}_{i,k} \right) \\
    \hat{\delta}_{i,k} &= l_d a_{i,k} - \varsigma_{i,k}
\end{aligned}
\label{dist_obs}
\end{equation}
\noindent where $\hat{\delta}_{i,k}$ denotes the estimated disturbance, $\varsigma_{i,k}$ is an internal state of the disturbance observer, and $l_d$ is the observer gain. 
Defining the disturbance estimation error as $\tilde{\delta}_{i,k} = \delta_{i,k} - \hat{\delta}_{i,k}$, 
the disturbance estimation error dynamics obtained from \eqref{nom_m} and \eqref{dist_obs} 
are given by
$
   \tilde{\delta}_{i,k+1} = (1 - l_d)\tilde{\delta}_{i,k}
   + \delta_{i,k+1}-\delta_{i,k}.
$
Assume $\Delta\delta_i\in\ell_2$, where
$\Delta\delta_{i,k}=\delta_{i,k+1}-\delta_{i,k}$.
Therefore, selecting $0<l_d<2$ ensures that the homogeneous disturbance-estimation error dynamics are Schur stable and $\tilde{\delta}_i\in\ell_2$ .
Given the disturbance observer in \eqref{dist_obs}, defining $\beta_f=\beta-1$,for $B_{n,i} \neq 0$, the following feedback-linearizing control law is used:
\begin{equation}
    u_{i,k} \!=\! \frac{1}{B_{n,i}} \left[
    \beta_f(a_{i,k}-u_{n,i,k})
    \!-\! f_{n,i}(v_{i,k},a_{i,k})
    \!+\! \mu_{i,k} - \hat{\delta}_{i,k}
    \right]
    \label{flcontrol}
\end{equation}
where $u_{n,i,k}$ is the control input from the  nominal controller, whereas  $\mu_{i,k}$ is the control input from a neural enhanced controller, with $0<\beta<1$. Actuator saturation and rate limits are neglected; the commanded input \(u_i\) is assumed to be applied exactly. Therefore, the stability guarantees apply to the unsaturated closed-loop system.
Substituting \eqref{flcontrol} into the nominally rewritten vehicle dynamics in \eqref{nom_m} yields
\begin{equation}
\begin{aligned}
  a_{i,k+1} &= \beta a_{i,k} -\beta_f  u_{n,i,k} + \mu_{i,k} + \tilde{\delta}_{i,k}.
\end{aligned}
\label{linearized_simple}
\end{equation}
To account for the spacing policy, using $\alpha_f = e^{-\frac{T}{h}}$, the following commonly used first-order filter is introduced \cite{Nunen2019}:
\begin{equation}
    u_{n,i,k+1} = \alpha_f u_{n,i,k} + (1-\alpha_f)\xi_{i,k}.
    \label{ue_filter}
\end{equation}
\subsection{Overlapping dynamics for String Stability Analysis}
Similar to \cite{Silva2025}, combining \eqref{linearized_simple} and \eqref{ue_filter} with the spacing error dynamics  \eqref{distance_error}, and defining the relative velocity between consecutive vehicles, $\Delta v_{i,k}=v_{i-1,k}-v_{i,k}$, the overlapping dynamics $\Sigma_{i,i-1} $ between vehicles $i$ and $i-1$ can be formulated as follows:
\begin{equation}
\left\{
\begin{alignedat}{1}
&\Delta d_{i,k+1}   ={}\Delta d_{i,k} + T\Delta v_{i,k} - hTa_{i,k} 
\\
&\Delta v_{i,k+1}   ={}\Delta v_{i,k} - T a_{i,k} + T a_{i-1,k} 
\\
&a_{i,k+1}          ={}\beta a_{i,k}  -\beta_f u_{n,i,k} + \mu_{i,k}
                         + \tilde{\delta}_{i,k} 
\\
&u_{n,i,k+1}        ={}\alpha_f u_{n,i,k}
                         + (1-\alpha_f)\xi_{i,k} 
\\
&a_{i-1,k+1}        ={}\beta a_{i-1,k} -\beta_f u_{n,i-1,k}
                         + \mu_{i-1,k}+\tilde{\delta}_{i-1,k}
\\
&u_{n,i-1,k+1}      ={}\alpha_f u_{n,i-1,k}
                         + (1-\alpha_f)\xi_{i-1,k}
\end{alignedat}
\right.
\end{equation}
We define 
$x_{1,i,k} =
    \begin{bmatrix}
        \Delta d_{i,k} & \Delta v_{i,k} & a_{i,k} & u_{n,i,k}
    \end{bmatrix}^\top,
$ and $ x_{2,i,k} =
    \begin{bmatrix}
        a_{i-1,k} & u_{n,i-1,k}
    \end{bmatrix}^\top
$. Then, the overlapping dynamics are rewritten as
\begin{align}
    x_{1,i,k+1}  &\!=\! A_1 x_{1,i,k} \!+\! B_1 \xi_{i,k} \!+\! D_1 x_{2,i,k} 
   \! +\! E_1 (\tilde{\delta}_{i,k}\! +\! \mu_{i,k})\notag
    \\
    x_{2,i,k+1} &\!=\! A_2 x_{2,i,k} \!+ \!B_2 \xi_{i-1,k} 
    \!+\! E_2 (\tilde{\delta}_{i-1,k} \!+ \!\mu_{i-1,k})\label{overl}
\end{align}
where
\begin{equation}
\begin{aligned}
    A_1 &= \begin{bmatrix}
      1 & T & -hT & 0 \\
      0 & 1 & -T & 0 \\
      0 & 0 & \beta & 1-\beta \\
      0 & 0 & 0 &  \alpha_f
    \end{bmatrix}, \quad
    B_1 = \begin{bmatrix}
      0 \\ 0 \\ 0 \\ 1- \alpha_f
    \end{bmatrix} \\
    D_1 &= \begin{bmatrix}
      0 & T & 0 & 0 \\
      0 & 0 & 0 & 0 \\
    \end{bmatrix}^\top,
    \quad
    E_1 = \begin{bmatrix}
      0 & 0 & 1 & 0
    \end{bmatrix}^\top
    \\
    A_2 &= \begin{bmatrix}
      \beta & 1-\beta \\
      0 &  \alpha_f
    \end{bmatrix}, \quad
    B_2 = \begin{bmatrix}
      0 \\  1-\alpha_f
    \end{bmatrix}, \quad
    E_2 = \begin{bmatrix}
      1 \\ 0
    \end{bmatrix}
\end{aligned}
\end{equation}
with the particular case $i=1$, the predecessor state corresponds to the leader and satisfies
$
x_{2,1,k+1}=A_2x_{2,1,k}+B_2\xi_{0,k}+E_2\tilde{\delta}_{0,k},
$
with no spacing or relative-velocity error assigned to the leader itself. The following control law is selected for the overlapping system in \eqref{overl}:
\begin{equation}
\xi_{i,k} = K_1 x_{1,i,k} + K_2 x_{2,i,k}
\label{xi_control}
\end{equation}
where $K_1 x_{1,i,k}$ is the feedback component, $K_2 x_{2,i,k}$ is the feedforward component associated with the predecessor states, and $\mu_{i,k}$ is the additional compensation signal. 
The feedback gain $K_1$ is designed to ensure closed-loop stability with disturbance attenuation. The feedforward gain $K_2$ is introduced to counteract the influence of the predecessor input  $u_{n,i-1,k}$, hence, a suitable choice is $K_2=\begin{bmatrix}0 & 1\end{bmatrix}$ as  in \cite{Ploeg2011}, and to preserve the structural properties required for the
string-stability analysis, the feedback gain is restricted to
$
    K_1 =
    \begin{bmatrix}
        k_d & k_v & k_a & 0
    \end{bmatrix}
$.
The residual input $\mu_{i,k}$ is included to attenuate the effects of the disturbance-estimation error $\tilde{\delta}_i(k)$ that remain after DOB compensation.

Here, the residual compensation input $\mu_{i,k}$ is generated using a REN. The parameters used to obtain $\mu_i(k)$ are trained to attenuate the residual effects of the virtual disturbance that are not fully compensated by the DOB and improve the closed-loop performance, while preserving the nominal feedback/feedforward control architecture. The overall closed-loop dynamics are shown in Fig.~\ref{compl}. The string stability is evaluated based on the $\ell_2$ disturbance string stability \cite{Besselink2017}.
\begin{definition}[Disturbance string stability]
Consider a cascade of $N$ systems with states $\bar{x}_i$, indexed by
$i \in \mathcal{N} := \{1,\dots,N\}$, and let $
    \bar{x} := \operatorname{col}(\bar{x}_1,\dots,\bar{x}_N)
$
denote the lumped state. Let $u_0$ denote an exogenous input to the
cascade, 
let $\omega_j$, $j\in\{0,\ldots,N\}$, denote the disturbances,
where $\omega_0$ represents the boundary disturbance associated
with the leader,
and
let
$
    y_i = h(\bar{x}_i)
$
denote a performance output.
The cascade is said to be $\ell_2$ disturbance string stable  under zero initial conditions  if there exist
constants $\gamma_0,\gamma_1 \geq 0$ independent of $i$
such that
\begin{equation}
    \normld{y_i}
    \leq
    \gamma_0 \normld{u_0}
    +
    \gamma_1\sup_{j\in\{0,\ldots,N\}} \normld{\omega_j} 
    \quad
    \forall i \in \mathcal{N}.
\label{definitionstring}
\end{equation}
\end{definition}
 
\begin{figure}
    \centering
    \includegraphics[width=1\linewidth]{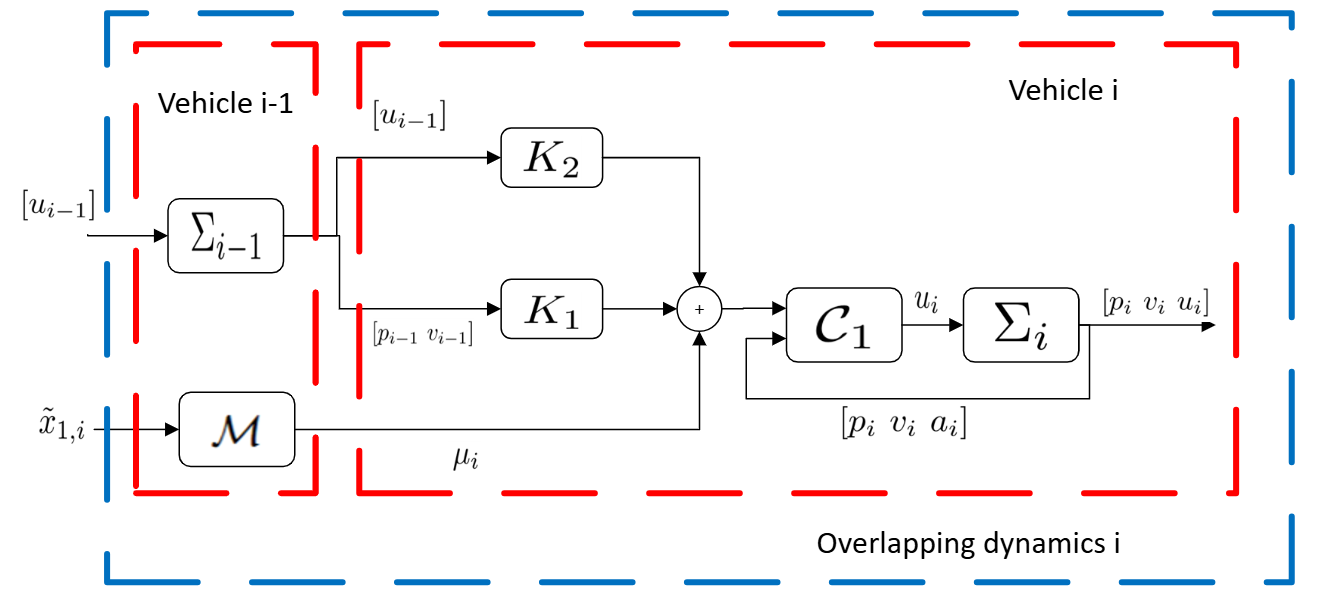}
    \caption{Closed loop of the proposed overlapping dynamics, where $\mathcal{C}_i$ is the feedback linearization controller.}
    \label{compl}
\end{figure}

\section{Controller Design}
This section presents the proposed controller design. We design a nominal feedback controller to ensure individual-vehicle stability and support string stability under bounded perturbations. Then, the REN-based residual compensator is developed to improve the closed-loop performance.
\subsection{Nominal Controller}
The nominal controller is designed to ensure individual-vehicle stability, support string stability and account for perturbations. 
From the overlapping dynamics in \eqref{overl}, the subsystem associated with $x_{1,i}$ can be written as
\begin{align}
    x_{1,i,k+1} &= (A_1+B_1K_1) x_{1,i,k} + (D_1+B_1K_2)x_{2,i,k} ~~~~
    \notag\\
    &+ 
    E_1 \tilde{\delta}_{i,k}
    + E_1\mu_{i,k}
\label{x1subsystem}
\end{align}
\noindent and defining $\mathcal{X}_{i,k} = \begin{bmatrix}
    x_{1,i,k}^\top & x_{2,i,k}^\top
\end{bmatrix}^\top$ 
and
$\omega_{i,k} = \tilde{\delta}_{i,k} + \mu_{i,k}$
the full overlapping system  as
\begin{equation}
\begin{aligned}
\mathcal{X}_{i,k+1}
={}& A_{\mathcal X}\mathcal{X}_{i,k}
+B_{\mathcal X,1}\xi_{i,k}
+B_{\mathcal X,2}\xi_{i-1,k}\\
&+E_{\mathcal X,1}\omega_{i,k}
+E_{\mathcal X,2}\omega_{i-1,k}
\end{aligned}
\label{overlapping_LMI}
\end{equation}
where
$E_{\mathcal X,1}=
\begin{bmatrix}
E_1^\top & 0
\end{bmatrix}^\top,$
$E_{\mathcal X,2}=
\begin{bmatrix}
0 & E_2^\top
\end{bmatrix}^\top,$
and
\begin{equation*}
A_{\mathcal X}=
\begin{bmatrix}
A_1 & D_1\\
0 & A_2
\end{bmatrix}, 
\quad
B_{\mathcal X,1}=
\begin{bmatrix}
B_1 \\ 0
\end{bmatrix},
\quad
B_{\mathcal X,2}=
\begin{bmatrix}
0 \\ B_2
\end{bmatrix}.
\end{equation*}
The performance output is selected as $ y_i = \xi_i$ for the string stability analysis, which is a linear combination of the overlapping states. To establish the frequency condition for disturbance string stability, consider the overlapping dynamics augmented with a one-step delay and $\omega_i = \omega_{i-1}= 0$. Defining $q_{i,k+1}=\xi_{i-1,k}$, and the augmented state $\mathcal X_{q,i,k}=[\mathcal X_{i,k}^{\top}\;q_{i,k}]^{\top}$. Separating the feedback and feedforward control components as $\xi_{i,k}=K_1x_{1,i,k}+F\mathcal X_{i,k}$, where $F = \begin{bmatrix}
    0_{1\times4} & K_2
\end{bmatrix}$ yields the augmented system 
\begin{equation}
    \begin{aligned}
\mathcal X_{q,i,k+1}&=A_QX_{q,i,k}+B_{Q,1}K_1x_{1,i,k}+B_{Q,2}\xi_{i-1,k},\\
y_{Q,i,k}&=C_QX_{q,i,k}+D_{Q,1}K_1x_{1,i,k}+D_{Q,2}\xi_{i-1,k},
\label{over_augmented}
\end{aligned}
\end{equation}
with performance output $
y_{Q,i,k}
=
\begin{bmatrix}
\xi_{i,k} &
\nu(\xi_{i-1,k}-q_{i,k})
\end{bmatrix}^{\top}
$, 
scalar $\nu > 0$
and matrices
$
B_{Q,1}=
\begin{bmatrix}
B_{\mathcal X,1}^{\top}&0
\end{bmatrix}^{\top}
$,
$
B_{Q,2}=
\begin{bmatrix}
B_{\mathcal X,2}^{\top}&1
\end{bmatrix}^{\top},
$
$D_{Q,1}=
\begin{bmatrix}
1&0
\end{bmatrix}^{\top},
$
$
D_{Q,2}=
\begin{bmatrix}
0&\nu
\end{bmatrix}^{\top}
$
and
\begin{equation*}
A_Q =
\begin{bmatrix}
A_{\mathcal X}+B_{\mathcal X,1}F & 0\\
0 & 0
\end{bmatrix},
\quad
C_Q=
\begin{bmatrix}
F&0\\
0_{1\times6}&-\nu
\end{bmatrix}.
\end{equation*}
Let $\mathcal T(z)$ denote the nominal transfer
function from $\xi_{i-1}$ to $\xi_i$ in \eqref{over_augmented}, then next proposition follows.
\begin{proposition}
 Consider the overlapping system \eqref{overlapping_LMI} and the augmented system \eqref{over_augmented}. For positive $\gamma_d>0$, 
suppose there exist
$Q_c\in\mathbb{S}_{+}^{3}$,
$Q_f\in\mathbb{S}_{+}^{4}$,
$q_u>0$, and
$X_c\in\mathbb{R}^{1\times3}$,
such that the following LMI conditions are jointly satisfied:
\begin{align}
&\begin{bmatrix}
-Q_1 & * & * & *\\
0 & -\gamma_d^2 I & * & *\\
A_1Q_1+B_1X_1 & E_1 & -Q_1 & *\\
Q_1 & 0 & 0 & -I
\end{bmatrix}
\prec 0
\label{lmiindividual}\\
&\begin{bmatrix}
-Q_2 & * & * & *\\
0 & -I & * & *\\
A_QQ_2+B_{Q,1}X_2 & B_{Q,2} & -Q_2 & *\\
C_QQ_2+D_{Q,1}X_2 & D_{Q,2} & 0 & -I
\end{bmatrix}
\preceq 0
\label{lmistring}
\end{align}
where
$Q_1 = \operatorname{blkdiag}(Q_c,q_u),$
$Q_2 = \operatorname{blkdiag}(Q_c,Q_f),$
$X_1 = [\,X_c\;\;0\,],$
and
$X_2 = \begin{bmatrix}
    X_c & 0_{1\times4}
\end{bmatrix}.$
Then, the feedback gain
$
K_1=X_1Q_1^{-1}
$
guarantees that the nominal local closed-loop dynamics
are Schur stable, with an induced
$\ell_2$ gain bounded by $\gamma_d$ from the local disturbance to $x_{1,i}$,
and that
the transfer function
$\mathcal{T}$ satisfies
$
|\mathcal{T}(e^{\mathrm{j}\theta})|^2
+\nu^2|1-e^{-\mathrm{j}\theta}|^2
\leq 1.
$
\end{proposition}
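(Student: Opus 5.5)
The plan is to treat the two LMIs separately, each as a Schur-complement form of a discrete-time bounded real lemma, and to use the shared variable $X_c$ to tie them to the same gain $K_1$.

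\textbf{Step 1: the gain is the same in both LMIs.} Since $Q_1=\operatorname{blkdiag}(Q_c,q_u)$ and $X_1=[X_c\;0]$, we have
\[
K_1=X_1Q_1^{-1}=[X_cQ_c^{-1}\;\;0].
\]
This has the prescribed structure $[k_d\;k_v\;k_a\;0]$. Next, $Q_2=\operatorname{blkdiag}(Q_c,Q_f)$, and $K_1x_{1,i}$ depends only on the first three coordinates of $\mathcal X_{q,i}$. Hence the augmented closed loop can be written with the gain $\bar K=[K_1\;0_{1\times3}]$ acting on $\mathcal X_{q,i}$. The same block-diagonal argument gives $X_2=[X_c\;0_{1\times4}]=\bar K Q_2$. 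So both LMIs are linearizations, under the change of variables $X=KQ$, of conditions on one and the same closed loop.

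\textbf{Step 2: individual stability from \eqref{lmiindividual}.} Substitute $X_1=K_1Q_1$, so that $A_1Q_1+B_1X_1=(A_1+B_1K_1)Q_1$. Congruence with $\operatorname{blkdiag}(Q_1^{-1},I,I,I)$ and $P_1=Q_1^{-1}$, followed by Schur complements on the last two blocks, gives
\[
\begin{bmatrix}A_{cl}^\top P_1A_{cl}-P_1+I & A_{cl}^\top P_1E_1\\ * & E_1^\top P_1E_1-\gamma_d^2\end{bmatrix}\prec0,
\qquad A_{cl}=A_1+B_1K_1 .
\]
This is exactly the bounded real lemma with output $x_{1,i}$ (the identity block). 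The top-left block yields $A_{cl}^\top P_1A_{cl}-P_1\prec0$, hence Schur stability.

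For the $\ell_2$ gain, take $V=x^\top P_1x$. The inequality gives
\[
V_{k+1}-V_k+\|x_{1,i,k}\|^2-\gamma_d^2|w_k|^2<0,
\]
where $w$ is the local disturbance entering through $E_1$. Summing from zero initial conditions yields $\|x_{1,i}\|_{\ell_2}\le\gamma_d\|w\|_{\ell_2}$. The predecessor input $x_{2,i}$ is treated as handled by the cascade and string analysis, so "local disturbance" means the signal entering via $E_1$.

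\textbf{Step 3: the string-stability frequency bound from \eqref{lmistring}.} The same congruence and Schur-complement steps, with $P_2=Q_2^{-1}$, $\bar A=A_Q+B_{Q,1}\bar K$ and $\bar C=C_Q+D_{Q,1}\bar K$, give the non-strict dissipation inequality
\[
\begin{bmatrix}\bar A^\top P_2\bar A-P_2+\bar C^\top\bar C & \bar A^\top P_2B_{Q,2}+\bar C^\top D_{Q,2}\\ * & B_{Q,2}^\top P_2B_{Q,2}+D_{Q,2}^\top D_{Q,2}-1\end{bmatrix}\preceq0 .
\]
Multiply this on both sides by $\operatorname{col}\big((e^{j\theta}I-\bar A)^{-1}B_{Q,2},\,1\big)$. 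The $P_2$ terms cancel, via the standard KYP frequency-domain manipulation, leaving $|G_Q(e^{j\theta})|^2\le1$ for the transfer vector $G_Q$ from $\xi_{i-1}$ to $y_Q$. Its first component is $\mathcal T$. Since $q_{i}=z^{-1}\xi_{i-1}$, the second component is $\nu(1-e^{-j\theta})$. The bound therefore reads
\[
|\mathcal T(e^{j\theta})|^2+\nu^2|1-e^{-j\theta}|^2\le1 .
\]

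\textbf{Main obstacle: well-posedness of the frequency evaluation.} Step 3 needs $e^{j\theta}I-\bar A$ to be invertible, which in turn needs the stability of $\bar A$. $\bar A$ is block-triangular. Its diagonal blocks are $A_1+B_1K_1$ (Schur by Step 2), $A_2$ (eigenvalues $\beta,\alpha_f\in(0,1)$) and $0$. The $(1,2)$ diagonal entry of $A_1+B_1K_1$ is $\alpha_f$, also Schur, because $K_1$ has a zero last entry and $B_1$ only enters the fourth row.

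I would first use this triangular eigenvalue argument to get Schur stability of $\bar A$, so that $\mathcal T$ is well defined on the unit circle. This avoids having to extract strict stability from the non-strict LMI, which the top-left block of \eqref{lmistring} alone would not give.

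A possible gap is the consistency of the different state partitions in $Q_1$ and $Q_2$, namely which coordinates $Q_c$ covers in each. I would verify that, in both LMIs, $Q_c$ corresponds to $(\Delta d,\Delta v,a_i)$.
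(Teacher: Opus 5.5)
Your proposal is correct and takes the same route as the paper, which just applies Schur complements (implicitly via $X=KQ$) to both LMIs to get $V_1(k+1)-V_1(k)+\normtwo{x_{1,i,k}}^2-\gamma_d^2|\omega_{i,k}|^2<0$ and $V_2(k+1)-V_2(k)+|\xi_{i,k}|^2+\nu^2|\xi_{i-1,k}-q_{i,k}|^2-|\xi_{i-1,k}|^2\le 0$. Your summation step, the frequency-domain evaluation, and the block-triangular argument that $\bar A$ is Schur spell out what the paper leaves implicit; the one slip is your aside that an $\alpha_f$ diagonal entry of $A_1+B_1K_1$ is ``also Schur''. That matrix is not triangular, so a diagonal entry says nothing about its spectrum, but the aside is superfluous because Step 2 already gives Schur stability of $A_1+B_1K_1$.
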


\begin{proof}
Let $P_j = Q^{-1}_j$, $V_1=x_{1,i,k}^{\top}P_1x_{1,i,k}$ and $
V_2=\mathcal X_{q,i,k}^{\top}
P_2\mathcal X_{q,i,k}$. Applying  Schur complement to \eqref{lmiindividual} and \eqref{lmistring} yields the inequalities
$
V_1(k+1)-V_1(k)
+\normtwo{x_{1,i,k}}^2
-\gamma_d^2|\omega_{i,k}|^2<0
$ and 
 
$V_2(k+1)-V_2(k)
+|\xi_{i,k}|^2
+\nu^2|\xi_{i-1,k}-q_{i,k}|^2
-|\xi_{i-1,k}|^2
\leq 0$, respectively.
\end{proof}
 \color{black}
 
\color{black}
\subsection{Neural Compensation}
To design the neural compensation, consider the stage cost
$
    \mathcal{J}_{i,k}
    =
    Q_x(x_{1,i,k})
    +
    q_d \tilde{\delta}_{i,k}^2
    +
    r_\mu \mu_{i,k}^2,
$
where $Q_x(\cdot)$ is a positive definite function penalizing the vehicle
state, and $q_d>0$ and $r_\mu>0$ weight the disturbance-estimation
error and the neural compensation effort. The
infinite-horizon cost is
$
J_i
    =
    \sum_{k=0}^{\infty}\mathcal{J}_{i,k}.
$
Since $\tilde{\delta}_{i,k}$ is not directly available, a one-step
nominal predictor is introduced as
\begin{equation}
    \hat{x}_{1,i,k+1}
    =
    A_1x_{1,i,k}
    +
    B_1\xi_{i,k}
    +
    D_1x_{2,i,k}
    +
    E_1\mu_{i,k}.
    \label{nominal_predictor}
\end{equation}
Using \eqref{nominal_predictor} and \eqref{overl}, we can derive the prediction error
$
    \tilde{x}_{1,i,k+1}
    =
    x_{1,i,k+1}
    -
    \hat{x}_{1,i,k+1}
    =
    E_1\tilde{\delta}_{i,k}.
$
 
The objective is to determine a neural compensation policy
$\mu_{i,k}=\pi(s_{i,k})$ that minimizes the cost function
 
,
$
   \pi^*
    =
    \arg\min_{\pi} J_i.
$
  
For a fixed policy $\pi$, the following action-value function satisfies the Bellman relation:
\begin{equation}
    Q^\pi(s_{i,k},\mathcal{X}_{i,k},\mu_{i,k})
    =
    \mathcal{J}_{i,k}
    +
    Q^\pi(s_{i,k+1},\mathcal{X}_{i,k+1},\mu_{i,k+1})
    \label{qfunction}
\end{equation}
where $s_{i,k}$ denotes the information supplied to the neural policy
and $\mu_{i,k+1}=\pi(s_{i,k+1})$.
The action-value function is approximated using the parameterized
structure
\begin{equation}
\begin{aligned}
\hat Q^\pi_{i,k}
={}&
\mathcal{J}_{i,k}
+\hat{\theta}\mu_{i,k}^2
+\mu_{i,k}y_{r,i,k}
+\mathcal{X}_{i,k}^{\top}\hat S \mathcal{X}_{i,k}
\end{aligned}
\label{Qapprox}
\end{equation}
where $\hat\theta\geq\bar{\theta}>0$, 
$\hat{S}\in\mathbb{S}^{6}$, and $y_{r,i,k}$ is generated by an
acyclic REN \cite{Revay2024} given by
\begin{equation}
    \begin{bmatrix}
        \chi_{i,k+1} \\
        \psi_{i,k} \\
        y_{r,i,k}
    \end{bmatrix}
    =
    \underbrace{
    \begin{bmatrix}
        \mathcal{A}_{n} & \mathcal{B}_{1} & \mathcal{B}_{2} \\
        0 & \mathcal{D}_{11} & \mathcal{D}_{12} \\
        \mathcal{C}_{2} & \mathcal{D}_{21} & \mathcal{D}_{22}
    \end{bmatrix}
    }_{\hat{W}}
    \begin{bmatrix}
        \chi_{i,k} \\
        \varphi_{i,k} \\
        s_{i,k}
    \end{bmatrix}
\label{REN}
\end{equation}
with $\varphi_{i,k}=\sigma(\psi_{i,k}),$
where $\sigma(\cdot)$ is an element-wise activation function,
$\mathcal{D}_{11}\in\mathbb{R}^{n_d\times n_d}$ is strictly lower
triangular, $\chi_{i,k}\in\mathbb{R}^{n_q}$ denotes the REN internal
state, and $s_{i,k}\in\mathbb{R}^{n_s}$ denotes the REN input. For this application, $s_{i,k} =\begin{bmatrix}
    x_{1,i,k}^\top & \tilde{x}_{1,i,k}^\top
\end{bmatrix}^\top$ is selected.
For fixed $\pi$, the Q-function approximator is trained offline using stored trajectories. Define $\tilde Q_j^\pi=\hat Q_j^\pi-\bar Q_j^\pi$, where $\bar Q_j^\pi$ is a finite-horizon rollout target. The parameters of \eqref{Qapprox} are trained to minimize 
$
\mathcal{L}_{\pi}
=
\frac{1}{N_{\mathrm{tr}}}
\sum_{j=1}^{N_{\mathrm{tr}}}
\left(\tilde{Q}_{j}^{\pi}\right)^2
$, where $N_{\mathrm{tr}}$ denotes the number of stored training samples.
To preserve stability properties, the trained REN parameters are projected
onto the set satisfying the following LMI condition: 
\begin{equation}
\begin{bmatrix}
\bar\Xi & \bar G^\top & \bar J^\top\\
\bar G & \mathcal Q & 0\\
\bar J & 0 & \gamma_r I
\end{bmatrix}
\succ 0
\label{Q_LMI}
\end{equation}
with
\[
\begin{aligned}
\bar\Xi &=
\begin{bmatrix}
\mathcal Q & 0 & 0\\
0 & 2I-\mathcal D_{11}-\mathcal D_{11}^{\top}
  & -\mathcal D_{12}\\
0 & -\mathcal D_{12}^{\top} & \gamma_r I
\end{bmatrix}\\
    \bar G &=
\begin{bmatrix}
\mathcal A_n\mathcal Q & \mathcal B_1 & \mathcal B_2
\end{bmatrix},
\quad
\bar J =
\begin{bmatrix}
\mathcal C_2\mathcal Q & \mathcal D_{21} & \mathcal D_{22}
\end{bmatrix}.
\end{aligned}
\]
Note that condition \eqref{Q_LMI} ensures an input-output gain $\gamma_r$ for the REN \cite{Revay2024}. With fixed $\mathcal A_n$ and $\mathcal C_2$ as scalar design parameters, the trained REN parameters are projected onto the admissible set by solving
\begin{equation}
\begin{aligned}
\min_{\substack{
    Q,\mathcal B_i, \mathcal D_{ij}
}}
\quad &
\sum_{i=1}^{2}
    \left\| \mathcal B_i-\hat{\mathcal B}_i \right\|_F^2
+
\sum_{i=1}^{2}\sum_{j=1}^{2}
    \left\| \mathcal D_{ij}-\hat{\mathcal D}_{ij} \right\|_F^2
\\
\text{s.t.}\quad & \eqref{Q_LMI}.
\end{aligned}
\end{equation}
The scalar parameter is projected independently as
$
\hat\theta
\leftarrow
\max\{\hat\theta,\bar\theta\}.
$
In consequence, the induced gain from the REN input to the compensation
signal is bounded by
$
\gamma_m
=
\frac{\gamma_r}
{2(r_\mu+\bar{\theta})}.
$The bound $\gamma_r$ is selected such that $\gamma_m$ satisfies the
small-gain conditions established in Section~IV.
After policy
evaluation step is completed, the policy is improved by selecting
the action that minimizes the learned action-value function,
$
    \mu_{i,k}^{\mathrm{new}}
    =
    \arg\min_{\mu_{i,k}}
    \hat Q^\pi_{i,k}.
$
which can be computed explicitly from \eqref{Qapprox} as 

\begin{equation}
    \mu_{i,k}^{\mathrm{new}}
    =
    -\frac{
    y_{r,i,k}}
    {2(r_\mu+\hat{\theta})}.
\end{equation}
The policy-evaluation and policy-improvement steps are repeated
according to the policy-iteration procedure.

\section{Main Results}
This sections presents the stability analysis of individual-vehicle and disturbance-string stability. 

\begin{theorem}
Consider subsystem \eqref{x1subsystem} with zero
predecessor excitation, $x_{2,i}=0$. Suppose that
\eqref{lmiindividual} is feasible with gain
$\gamma_d$, and that the REN satisfies \eqref{Q_LMI}, yielding the
compensation-policy gain $\gamma_m$. If
$
\gamma_d\gamma_m<1,
$
then the local feedback interconnection between the nominal vehicle
dynamics and the neural compensation is $\ell_2$ stable. Moreover,
the induced gain from $\tilde\delta_i$ to $x_{1,i}$ is finite.
\end{theorem}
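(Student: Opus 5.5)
The argument is a small-gain bound on a feedback loop of two $\ell_2$-bounded operators. With $x_{2,i}=0$, subsystem \eqref{x1subsystem} reduces to
\[
x_{1,i,k+1}=(A_1+B_1K_1)x_{1,i,k}+E_1\omega_{i,k},\qquad \omega_{i,k}=\tilde\delta_{i,k}+\mu_{i,k},
\]
which is the operator $\mathcal G:\omega_i\mapsto x_{1,i}$. The neural compensation is the operator $\mathcal M:x_{1,i}\mapsto\mu_i$. It is obtained by feeding $s_{i,k}=[x_{1,i,k}^\top\;\tilde x_{1,i,k}^\top]^\top$ into the REN \eqref{REN} and setting $\mu_{i,k}=-y_{r,i,k}/(2(r_\mu+\hat\theta))$. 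The loop is closed through $\omega_i=\tilde\delta_i+\mathcal M(x_{1,i})$. I treat $\tilde\delta_i\in\ell_2$ as the exogenous input, which is justified by the choice $0<l_d<2$ together with $\Delta\delta_i\in\ell_2$.

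First I extract the gain of $\mathcal G$ from Proposition~1. Summing the dissipation inequality
\[
V_1(k+1)-V_1(k)+\|x_{1,i,k}\|_2^2-\gamma_d^2|\omega_{i,k}|^2<0
\]
from $k=0$ to $K-1$ with zero initial state, and using $V_1\ge0$, gives $\|x_{1,i}\|_{\ell_2,[0,K)}\le\gamma_d\|\omega_i\|_{\ell_2,[0,K)}$ on every finite horizon. This truncated form is what allows the loop to be handled before the signals are known to lie in $\ell_2$. Next I extract the gain of $\mathcal M$. Condition \eqref{Q_LMI} gives the REN an incremental $\ell_2$ gain $\gamma_r$ from $s_i$ to $y_{r,i}$, by the cited result of \cite{Revay2024}. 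Since $\hat\theta\ge\bar\theta$, the scaling factor satisfies $1/(2(r_\mu+\hat\theta))\le 1/(2(r_\mu+\bar\theta))$, so $\|\mu_i\|\le\gamma_m\|s_i\|$ with zero REN initial state and zero bias. Both bounds hold on truncated horizons, because both operators are causal and have dissipation-type certificates.

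The main obstacle is the definition of the REN input. The signal $s_i$ contains $\tilde x_{1,i,k}=E_1\tilde\delta_{i,k-1}$, which is not a function of $x_{1,i}$. I would split it as $\|s_i\|\le\|x_{1,i}\|+\|\tilde\delta_i\|$, using the delayed signal and $\|E_1\|_2=1$. The REN input therefore enters as $x_{1,i}$, which is part of the loop, plus a second exogenous term $\tilde\delta_i$. Combining the two gains on the horizon $[0,K)$ gives
\[
\|x_{1,i}\|\le\gamma_d\bigl(\|\tilde\delta_i\|+\gamma_m\|x_{1,i}\|+\gamma_m\|\tilde\delta_i\|\bigr).
\]
Because $\gamma_d\gamma_m<1$ and the truncated norms are finite, this rearranges to
\[
\|x_{1,i}\|_{\ell_2,[0,K)}\le\frac{\gamma_d(1+\gamma_m)}{1-\gamma_d\gamma_m}\,\|\tilde\delta_i\|_{\ell_2,[0,K)}.
\]
Letting $K\to\infty$ gives $x_{1,i}\in\ell_2$ with a finite induced gain from $\tilde\delta_i$ to $x_{1,i}$. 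The same chain then bounds $\mu_i$ and $\omega_i$, and with them $\xi_i=K_1x_{1,i}$.

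A second point to address is well-posedness. The interconnection has no algebraic loop: $\mathcal G$ is strictly proper, and the acyclic REN structure, in which $\mathcal D_{11}$ is strictly lower triangular, makes $\varphi$ explicitly computable. Every signal is therefore uniquely and causally defined, and the truncation argument is legitimate. This establishes $\ell_2$ stability of the loop, that is, bounded gains from the exogenous inputs $\tilde\delta_i$ (and from any injected loop signal) to all internal signals. If the REN was trained with nonzero biases, I would restate the bound as an affine $\ell_2$ estimate or assume zero bias. This is consistent with the linear form of \eqref{REN}.
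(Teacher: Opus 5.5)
Your proposal is correct and follows the paper's argument. It uses the same steps: the $\gamma_d$ bound from \eqref{lmiindividual}, the $\gamma_m$ bound from \eqref{Q_LMI} with the $\hat\theta\ge\bar\theta$ scaling, the identity $\tilde x_{1,i,k+1}=E_1\tilde\delta_{i,k}$ to treat the second REN input as exogenous, and the small-gain rearrangement to $\gamma_d(1+\gamma_m)/(1-\gamma_d\gamma_m)$. Your truncated-norm and well-posedness remarks actually tighten the paper's version, which rearranges untruncated $\ell_2$ norms and so implicitly assumes $x_{1,i}\in\ell_2$ before proving it.
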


\begin{proof}
From \eqref{lmiindividual} and \eqref{Q_LMI}, we obtain 

\begin{equation}
    \|x_{1,i}\|_{\ell_2}
    \leq
    \gamma_d
    \left(
        \|\mu_i\|_{\ell_2}
        +
        \|\tilde{\delta}_i\|_{\ell_2}
    \right)
    \label{p1}
\end{equation}
\begin{equation}
   \normld{\mu_i}
    \leq
    \gamma_m
    \left( \normld{x_{1,i}}
        +
        \normld{\tilde{x}_{1,i}}
    \right)
\end{equation}
Since
$
\tilde{x}_{1,i,k+1}
=
E_1\tilde{\delta}_{i,k},
$, then
$
    \normld{\tilde{x}_{1,i}}
    =
    \normld{\tilde \delta_i}
$
and
\begin{equation}
    \|\mu_i\|_{\ell_2}
    \leq
    \gamma_m
    \left(
        \|x_{1,i}\|_{\ell_2}
        +
        \|\tilde{\delta}_i\|_{\ell_2}
    \right).
    \label{p3}
\end{equation}
Substituting \eqref{p3} into \eqref{p1}, it follows that 
\begin{equation}
    \|x_{1,i}\|_{\ell_2}
    \leq
    \frac{\gamma_d(1+\gamma_m)}
         {1-\gamma_d\gamma_m}
    \|\tilde{\delta}_i\|_{\ell_2}.
    \label{fineq}
\end{equation}
Therefore, the induced $\ell_2$ gain from $\tilde{\delta}_i$ to
$x_{1,i}$ is finite. Furthermore, in the absence of external
disturbances, the small-gain condition
$\gamma_d\gamma_m<1$ guarantees internal stability of the feedback
interconnection.
\end{proof}

 \begin{theorem}
Consider the platoon under the control law \eqref{xi_control} and zero
initial conditions. Suppose that \eqref{lmiindividual} and
\eqref{lmistring} are feasible, $\xi_0\in\ell_2$, and $
\bar\omega
=
\sup_{j\in\{0,\ldots,N\}}
\|\omega_j\|_{\ell_2}<\infty,
$
then with the performance output $y_i=\xi_i$, the closed-loop platoon
is $\ell_2$ disturbance string stable.
\end{theorem}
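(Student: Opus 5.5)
The plan is to write each vehicle's input $\xi_i$ as the image of the predecessor input $\xi_{i-1}$ plus the local disturbances, bound this map in $\ell_2$ using the two LMIs of the Proposition, and then unroll the resulting recursion along the string.

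\textbf{Step 1 (superposition).} By linearity of the overlapping system \eqref{overlapping_LMI} under the control law \eqref{xi_control} with zero initial conditions, decompose
\begin{equation*}
\xi_i=\mathcal T\,\xi_{i-1}+\mathcal G_1\,\omega_i+\mathcal G_2\,\omega_{i-1}.
\end{equation*}
Here $\mathcal T$ is the nominal transfer function from the Proposition, and $\mathcal G_1,\mathcal G_2$ are the transfer functions from $\omega_i$ and $\omega_{i-1}$ to $\xi_i$.

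\textbf{Step 2 (gain bounds).} Feasibility of \eqref{lmistring}, via the dissipation inequality for $V_2$ in the proof of the Proposition, gives $|\mathcal T(e^{\mathrm j\theta})|^2\le 1-\nu^2|1-e^{-\mathrm j\theta}|^2$. Hence $\normH{\mathcal T}\le 1$.

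Feasibility of \eqref{lmiindividual} makes $A_1+B_1K_1$ Schur stable. The matrix $A_2$ is upper triangular with eigenvalues $\beta,\alpha_f\in(0,1)$, so it is also Schur. Therefore $A_{\mathcal X}+B_{\mathcal X,1}[K_1\;K_2]$ is block triangular and Schur. It follows that $\mathcal G_1$ and $\mathcal G_2$ are stable with finite gains $g_1,g_2$, and these do not depend on $i$ because all vehicles share the same gains and closed-loop matrices. In particular, the triangle inequality gives
\begin{equation*}
\normld{\xi_i}\le \normld{\xi_{i-1}}+(g_1+g_2)\,\bar\omega .
\end{equation*}

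\textbf{Step 3 (recursion) and the main obstacle.} Iterating this bound yields
\begin{equation*}
\normld{\xi_i}\le\normld{\xi_0}+i(g_1+g_2)\bar\omega .
\end{equation*}
The coefficient of $\bar\omega$ grows with $i$, which is not the uniform constant $\gamma_1$ required by \eqref{definitionstring}. This is the main obstacle.

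To obtain uniformity I would not bound each stage separately. Instead I would write $\xi_i=\mathcal T^i\xi_0+\sum_{j}\mathcal T^{i-j}(\mathcal G_1\omega_j+\mathcal G_2\omega_{j-1})$ and exploit the strict inequality away from $\theta=0$: since $|\mathcal T|^2\le 1-\nu^2|1-e^{-\mathrm j\theta}|^2$, the sum $\sum_m|\mathcal T(e^{\mathrm j\theta})|^m$ is bounded by $1/(\nu^2|1-e^{-\mathrm j\theta}|^2)$, a bound that blows up only at $\theta=0$.

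I would then need to show that $\mathcal G_1$ and $\mathcal G_2$ have a zero at $z=1$. For instance, one could check that the DC gain from a constant disturbance to $\xi_i$ vanishes because of the integrator structure in the $\Delta d$ and $\Delta v$ states. Given such a zero, the product $|\mathcal G_\ell|/|1-e^{-\mathrm j\theta}|^{2}$ stays bounded, which produces a uniform $\gamma_1$. If that zero does not hold, the fallback is to interpret $\bar\omega$ as bounding the aggregate disturbance (for example, the sum over $j$), which directly gives $\gamma_1=g_1+g_2$.

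\textbf{Step 4 (conclusion).} The leader term satisfies $\normld{\mathcal T^i\xi_0}\le\normld{\xi_0}$, so $\gamma_0=1$. Combining this with the uniform disturbance bound from Step 3 gives \eqref{definitionstring} with $y_i=\xi_i$. Finally, note that $\omega_j=\tilde\delta_j+\mu_j$ is in $\ell_2$ by Theorem 1 and $\tilde\delta_j\in\ell_2$, so $\bar\omega<\infty$ is consistent with the hypotheses.
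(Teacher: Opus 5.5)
\paragraph{Verdict.} Your Steps 1--2 and your diagnosis of the obstacle agree with the paper. The way you propose to get past that obstacle, however, has a genuine gap.

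\paragraph{The zero you need does not exist.} The property you would need to show is false: $\mathcal G_1$ and $\mathcal G_2$ do \emph{not} vanish at $z=1$. Take $\xi_{i-1}=0$ and $\omega_{i-1}=0$, and apply a constant disturbance $\omega_i$. Since $\Delta d$ and $\Delta v$ settle, $a_i\to0$, and this forces $\xi_i=u_{n,i}\to-\omega_i/(1-\beta)$. The integrators remove the state errors, not the control effort. Concretely, the closed-loop characteristic polynomial satisfies $D_c(1)=(1-\alpha_f)(1-\beta)k_dT^2\neq0$. This gives $\mathcal G_1(1)=-1/(1-\beta)$, $\mathcal G_2(1)=1/(1-\beta)$ and $\mathcal T(1)=1$.

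\paragraph{Missing idea: regroup by disturbance index.} Each $\omega_j$ enters vehicle $j$ through $\mathcal G_1$ and vehicle $j+1$ through $\mathcal G_2$. Its total contribution to $\xi_i$ is therefore $\mathcal T^{i-j-1}(\mathcal T\mathcal G_1+\mathcal G_2)\,\omega_j$. The two DC values cancel. In fact $\mathcal T\mathcal G_1+\mathcal G_2=(z-1)^3(z-\alpha_f)(z-\beta)R(z)/D_c(z)^2$ with $R$ affine in $z$. Only the boundary terms $\mathcal G_1\omega_i$ and $\mathcal T^{i-1}\mathcal G_2\omega_0$ remain unpaired, and these are handled by $\normH{\mathcal T}\le1$.

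\paragraph{Wrong kind of frequency-wise estimate.} Even with the correct zero, bounding $\sup_\theta\sum_m|\mathcal T(e^{\mathrm j\theta})|^m|G(e^{\mathrm j\theta})|$ is not enough. The $\omega_j$ are distinct signals, known only to satisfy $\normld{\omega_j}\le\bar\omega$. The triangle inequality therefore needs $\sum_m\normH{\mathcal T^mG}$ bounded uniformly, i.e.\ the supremum must sit inside the sum.

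\paragraph{Why a triple zero is required.} Let $r_z=|1-e^{-\mathrm j\theta}|$. The bounds $|\mathcal T|^m\le e^{-m\nu^2r_z^2/2}$ and $|G|\le\eta r_z^p$ give $\normH{\mathcal T^mG}\lesssim m^{-p/2}$. This is summable only for $p\ge3$; a double zero would leave this argument with a harmonic series. A simple zero, as you wrote, would not even keep $|G|/r_z^2$ bounded. The cubic zero is what yields the $\eta\,(3/(e\nu^2))^{3/2}m^{-3/2}$ decay and the constant $\zeta(3/2)$.

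\paragraph{Fallback.} Measuring disturbances by $\sum_j\normld{\omega_j}$ changes the definition of disturbance string stability, which uses $\sup_j$. It therefore does not prove the stated theorem.
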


\begin{proof}
Under zero initial conditions, taking the $z$-transform of the
closed-loop dynamics yields
 
\begin{equation}
\Xi_i(z)
=
\mathcal{T}(z)\Xi_{i-1}(z)
+
P_c(z)\Omega_i(z)
+
P_p(z)\Omega_{i-1}(z)
\label{eq:local_transfer}
\end{equation}
where $\mathcal{T}(z)$ denotes the closed-loop transfer function from
$\xi_{i-1}$ to $\xi_i$, while $P_c(z)$ and $P_p(z)$ denote the
transfers from $\omega_i$ and $\omega_{i-1}$ to $\xi_i$, respectively.

Define $\Delta_z=z-1$,
$N_a=k_a\Delta_z^2-T(k_v+hk_d)\Delta_z-k_dT^2$,
$
N_p=T(k_dT+k_v\Delta_z)$
and 
$
D_c=\Delta_z^2(z-\alpha_f)(z-\beta)
-(1-\alpha_f)(1-\beta)N_a
$. Then, we have 
\begin{equation*}
\begin{aligned}
\mathcal T(z)
&=\frac{(1-\alpha_f)
[\Delta_z^2(z-\beta)+(1-\beta)N_p]}{D_c}\\
P_c(z)
&=\frac{(z-\alpha_f)N_a}{D_c},
\quad
P_p(z)=\frac{(z-\alpha_f)N_p}{D_c}.
\end{aligned}
\end{equation*}
Define $
R(z)=Tk_d[T-(1-\alpha_f)h]
+\Delta_z[(1-\alpha_f)k_a+Tk_v]
$ and $P_f(z)=\mathcal T(z) P_c(z)+P_p(z)$. Then, it follows that 
\begin{equation}
P_f(z)
=\Delta_z^3\frac{(z-\alpha_f)(z-\beta)R(z)}{D_c^2}
=(z-1)^3\bar P_f(z).
\end{equation}
Since $A_1+B_1K_1$ is Schur and
$D_c(z)=\det(zI-A_1-B_1K_1)$, it follows that
$\bar P_f\in\mathcal{H}_\infty$.
 
Recursive substitution of \eqref{eq:local_transfer} yields
\begin{align}
\Xi_i(z)
={}&
\mathcal{T}(z)^i\Xi_0(z)
+
P_c(z)\Omega_i(z)
+
\mathcal{T}(z)^{i-1}P_p(z)\Omega_0(z) \notag
\\
&+
\sum_{j=1}^{i-1}
\mathcal{T}(z)^{i-j-1}
P_f(z)\Omega_j(z).
\label{eq:recursive_string}
\end{align}
Since the nominal closed-loop dynamics are Schur stable,
$\mathcal{T}(z)$, $P_c(z)$, and $P_p(z)$ are stable transfer
functions. Defining
$
p_c=\normH{P_c},
$
$p_p=\normH{P_p},
$
and using
$\normH{\mathcal T}\leq1
$
from \eqref{lmistring}, it follows that 
\begin{equation}
\normld{\xi_i}
\leq
\normld{\xi_0}
+
\left(
p_c+p_p+
\sum_{j=1}^{i-1}
\normH{\mathcal{T}^{i-j-1}P_f}
\right)\bar\omega.
\label{eq:dss_prebound}
\end{equation}
From \eqref{lmistring},
$
|\mathcal{T}(e^{\mathrm{j}\theta})|^2
+\nu^2|1-e^{-\mathrm{j}\theta}|^2
\leq1.
$
Define $r_z=|1-e^{-\mathrm{j}\theta}|$. Since
$0\leq\nu^2r_z^2\leq1$, using
$\sqrt{1-x}\leq e^{-x/2}$ gives
$
|\mathcal{T}(e^{\mathrm{j}\theta})|^{i-j-1}
\leq
e^{-(i-j-1)\nu^2r_z^2/2}.
$
From
$
P_f(z)=(z-1)^3\bar P_f(z),
$
defining
$
\eta=\|\bar P_f\|_{\mathcal H_\infty}<\infty
$
gives
$
|P_f(e^{\mathrm{j}\theta})|\leq\eta r_z^3.
$
Thus, for $j=1,\ldots,i-2$,
\begin{equation}
\|\mathcal{T}^{i-j-1}P_f\|_{\mathcal H_\infty}
\leq
\eta
\left(\frac{3}{e\nu^2}\right)^{3/2}
(i-j-1)^{-3/2}.
\end{equation}
Denote $p_f=\|P_f\|_{\mathcal H_\infty}$, it follows that 
\begin{equation}
\sum_{j=1}^{i-1}
\|\mathcal{T}^{i-j-1}P_f\|_{\mathcal H_\infty}
\leq
p_f+
\eta
\left(\frac{3}{e\nu^2}\right)^{3/2}
\zeta\!\left(\frac32\right)
\end{equation}
with $\zeta\!\left(\frac32\right)$ being the Riemman zeta function evaluated at $\frac{3}{2}$. Hence, it follows from \eqref{eq:dss_prebound} that 
\begin{equation}
\|\xi_i\|_{\ell_2}
\leq
\|\xi_0\|_{\ell_2}
+
\gamma_1\bar\omega
\label{eq:xi_omega_bound}
\end{equation}
where
$
\gamma_1=
p_c+p_p+p_f+
\eta
\left(\frac{3}{e\nu^2}\right)^{3/2}
\zeta\!\left(\frac32\right).
$
Since $\gamma_1<\infty$ is independent of $i$, condition
\eqref{eq:xi_omega_bound} implies \eqref{definitionstring}, with $y_i=\xi_i$, $u_0=\xi_0$, and
$\gamma_0=1$.
\end{proof}

\noindent\textit{Corollary:}
Let $T_x$, $P_{x,c}$, and $P_{x,p}$ denote the stable transfer
functions from $\xi_{i-1}$, $\omega_i$, and $\omega_{i-1}$ to
$x_{1,i}$, respectively, and define
$
g_\xi=\|T_x\|_{\mathcal H_\infty},
$
$
g_\omega=
\normH{P_{x,c}}
+
\|P_{x,p}\|_{\mathcal H_\infty}.
$
If
\begin{equation}
\gamma_m(g_\xi\gamma_1+g_\omega)<1
\label{eq:string_small_gain}
\end{equation}
and
$
\bar\delta=
\sup_j\|\tilde\delta_j\|_{\ell_2}<\infty,
$
then the platoon is $\ell_2$ disturbance string stable with respect to $\tilde\delta_i$

\begin{proof}    
Taking the z-transform for the overlapping dynamics \eqref{overl} and defining $A_c = A_1+B_1K_1$, we have 
\begin{align}
X_{1,i}(z)
&=(zI-A_c)^{-1}
[(D_1+B_1K_2)X_{2,i}(z)
+E_1\Omega_i(z)]\notag\\
X_{2,i}(z)
&=(zI-A_2)^{-1}
[B_2\Xi_{i-1}(z)
+E_2\Omega_{i-1}(z)].
\label{zoverlap}
\end{align}
Substituting $X_{2,i}(z)$ into $X_{1,i}$ in \eqref{zoverlap}, it follows that 
\begin{equation}
X_{1,i}(z)
=
T_x\Xi_{i-1}(z)
+
P_{x,c}\Omega_i(z)
+
P_{x,p}\Omega_{i-1}(z).
\label{tfX}
\end{equation}
Taking  the induced $\ell_2$ gain from \eqref{tfX}, we can derive   
\begin{equation}
    \normld{x_{1,i}} \leq 
    g_\xi\normld{\xi_{i-1}} +g_\omega\bar\omega.
    \label{intdd}
\end{equation}
Then, combining \eqref{intdd} and \eqref{eq:xi_omega_bound}, it follows that 
\begin{equation}
\|x_{1,i}\|_{\ell_2}
\leq
g_\xi\|\xi_0\|_{\ell_2}
+
(g_\xi\gamma_1+g_\omega)\bar\omega.
\label{eq:x_string_bound}
\end{equation}
Since $\normld{\omega_i} \leq \normld{\tilde\delta_i}+\normld{\mu_i}$, using \eqref{p3} and \eqref{eq:x_string_bound} gives 
\begin{equation}
\bar\omega
\leq
(1+\gamma_m)\bar\delta
+
\gamma_m g_\xi\|\xi_0\|_{\ell_2}
+
\gamma_m(g_\xi\gamma_1+g_\omega)\bar\omega.
\label{eq:x_string_bound11}
\end{equation}
Thus, under \eqref{eq:string_small_gain}, it follows from \eqref{eq:x_string_bound11} that 
\begin{equation}
\bar\omega
\leq
\frac{
\gamma_mg_\xi\|\xi_0\|_{\ell_2}
+
(1+\gamma_m)\bar\delta
}{
1-\gamma_m(g_\xi\gamma_1+g_\omega)
}
<\infty.\label{eqlalala}
\end{equation}
Substituting \eqref{eqlalala} into \eqref{eq:xi_omega_bound}, we can establish the
disturbance string stability with respect to $\tilde\delta_i$.
\end{proof}

\section{Experimental Results}

\begin{figure}[t]
    \centering
    \includegraphics[width=0.40\linewidth]{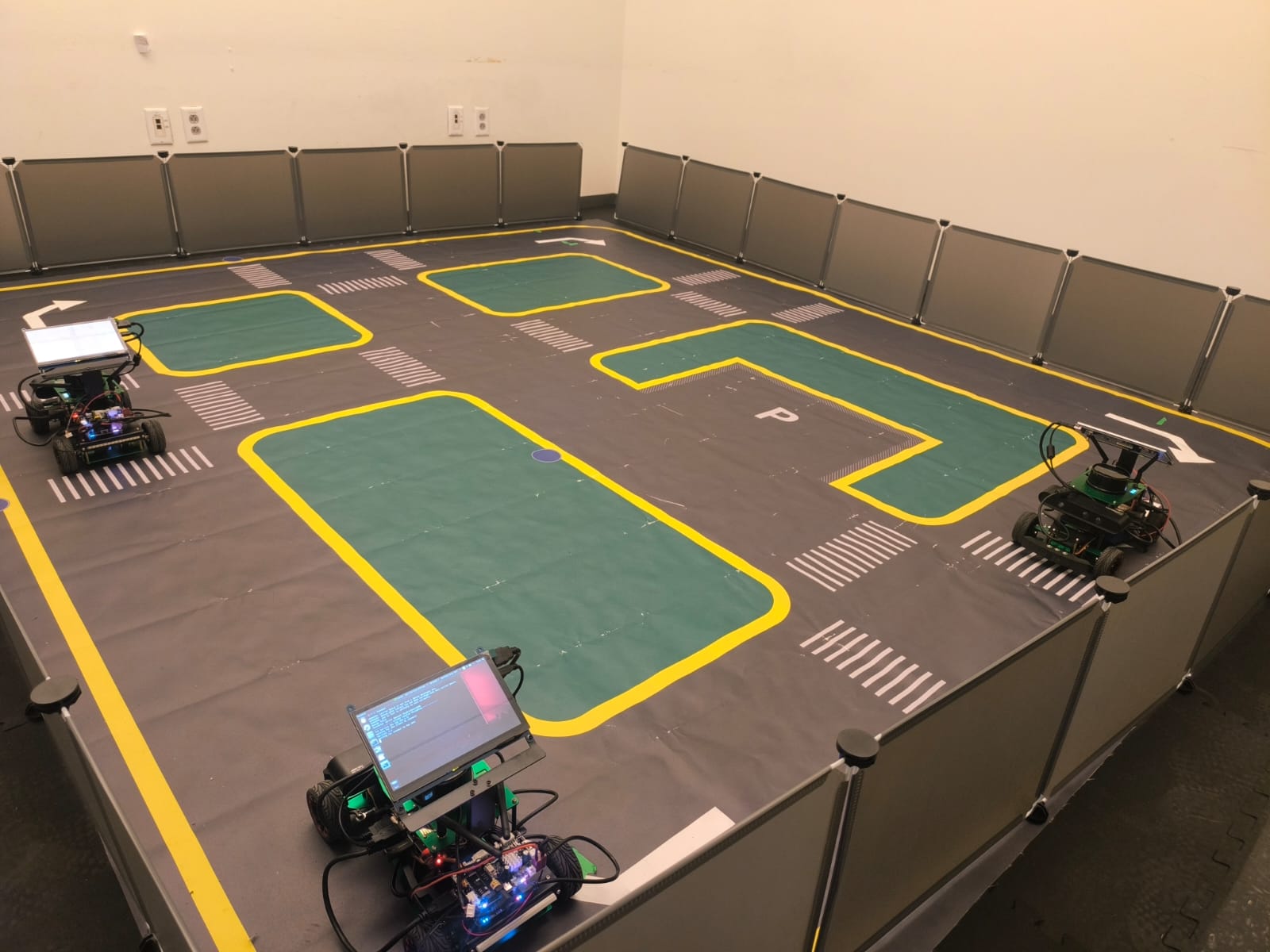}
    \includegraphics[width=0.55\linewidth]{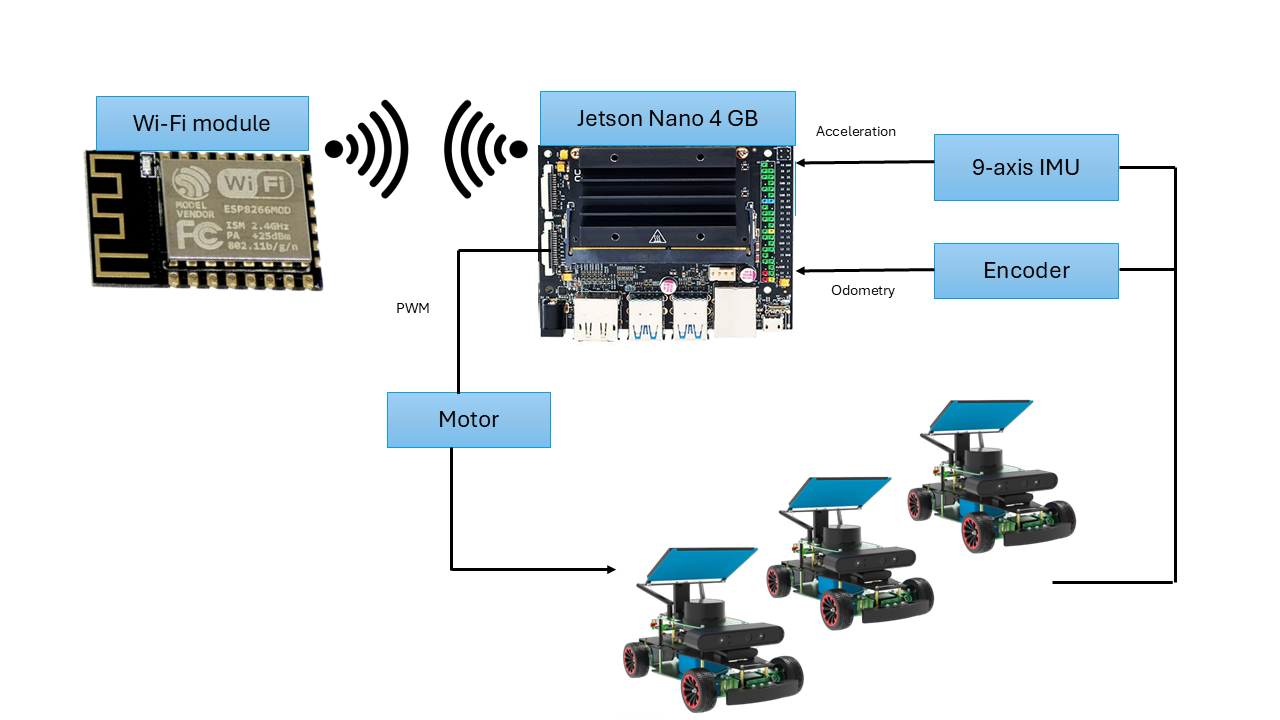}
    \caption{Experimental test setup and onboard control architecture.}
    \label{expsetup}
\end{figure}
Physical experiments were conducted using three vehicles Ackermann-steered vehicles, consisting of one leader and two followers, as shown in Fig. \ref{expsetup}, traveling along the track's outer path.Lateral path following was handled independently, as this study focuses only on longitudinal platoon control. 
Each vehicle is equipped with an onboard
Jetson Nano 4 GB computer and exchanges predecessor information through
Wi-Fi. The control algorithms are executed locally at a sampling frequency of 50 Hz. The nominal parameters used for controller design are given in Table \ref{parameters}. 
\begin{table}[!ht]
    \caption{Nominal parameters used for controller design.}
    \centering
    \label{parameters}
    \begin{tabular}{c c c c }
        \hline \hline
        Parameter & Value & Parameter & Value \\
        \hline
        $m$ & 4 kg & $\tau$ & 0.64 s \\
        $T$ & 0.02 s & c  & 0 kg/m \\
        $h$ & 1 s & $r$ & 1 m \\
        $\beta$ & 0.1 & $l_d$ &  0.02      \\ 
        \hline \hline
    \end{tabular}
\end{table}
Both controllers were evaluated using a predefined leader velocity profile for velocity tracking and spacing regulation. 
The gains used for the nominal controller were $K_1 = \begin{bmatrix}
0.735 & 1.596 & -1.605 & 0
\end{bmatrix}$ and  $K_2 = \begin{bmatrix}
0 & 1
\end{bmatrix}$. The REN employed had one recurrent state $n_q=1$ and nonlinear dimension $n_d=2$. 
Fig. \ref{ex1} shows the experimental results using only the nominal controller, while Fig. \ref{rex2} uses the proposed controller. Using only the nominal control, both followers track the leader velocity profile, but residual model mismatch still produces noticeable oscillations in the velocity and spacing errors. Our proposed controller reduces these errors as shown in Fig. \ref{rex2}. To quantify the performance, the root mean square error (RMSE) was calculated as $\mathrm{RMSE}
=
\sqrt{\frac{1}{n}
\sum_{k=1}^{n}
(\Delta e_{k})^2}$, with $e_k$ representing either the relative speed or spacing error and $n$ the number of measurements. Table \ref{performance} summarizes the RMSE values obtained for both controllers and both followers
The REN-enhanced
controller reduces the relative-velocity RMSE
by approximately 32.6\% and 29.6\% for
followers 1 and 2, respectively, while the spacing error RMSE reductions are 40.9\% and 12.2\%. These results
demonstrate the experimental performance
improvement achieved by the learned residual
compensation.

\begin{figure}[!ht]
    \centering
    \includegraphics[width=0.49\linewidth]{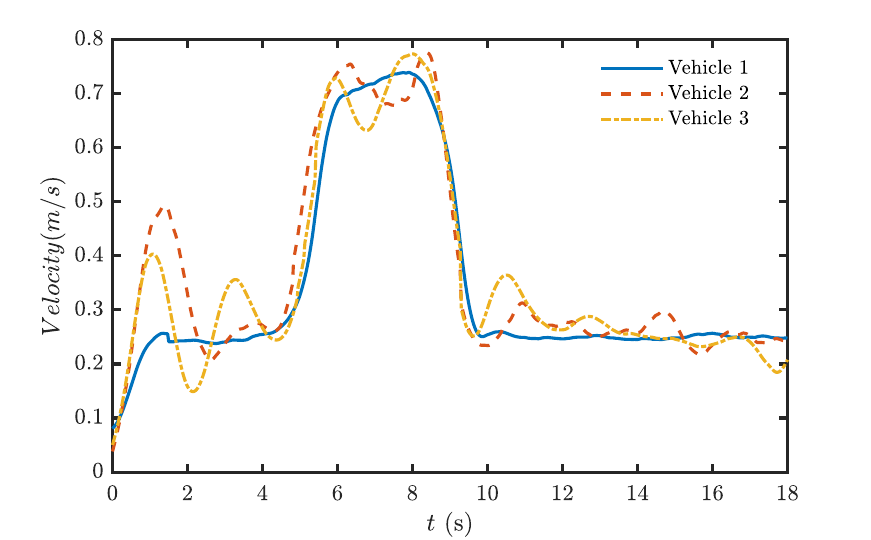}
    \includegraphics[width=0.49\linewidth]{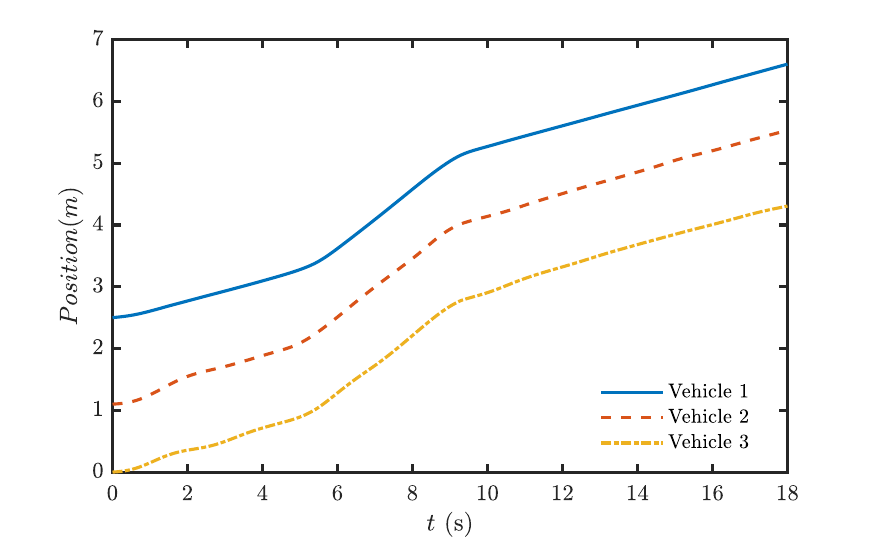}
    \centering
    \includegraphics[width=0.49\linewidth]{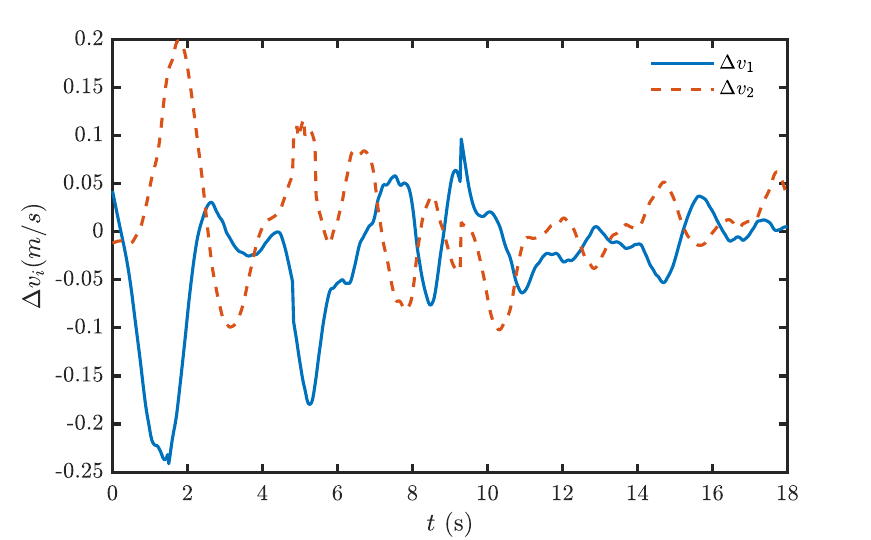}
    \includegraphics[width=0.49\linewidth]{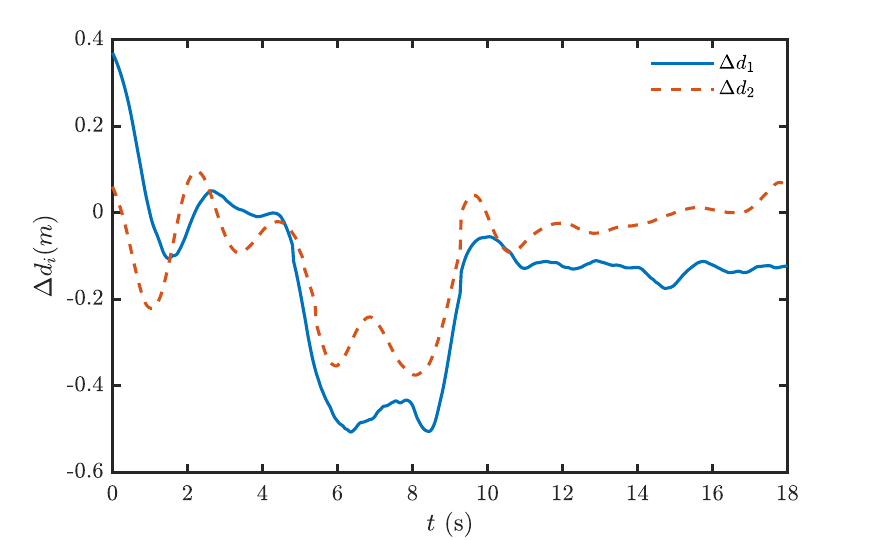}
    \caption{Velocity and spacing errors using the nominal controller.}
    \label{ex1}
\end{figure}

\begin{figure}[t]
    \centering
    \includegraphics[width=0.49\linewidth]{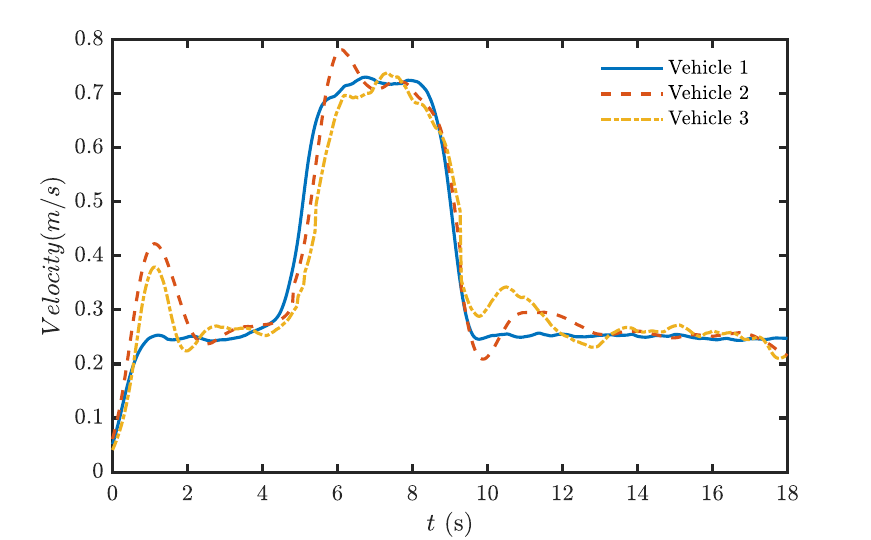}
    \includegraphics[width=0.49\linewidth]{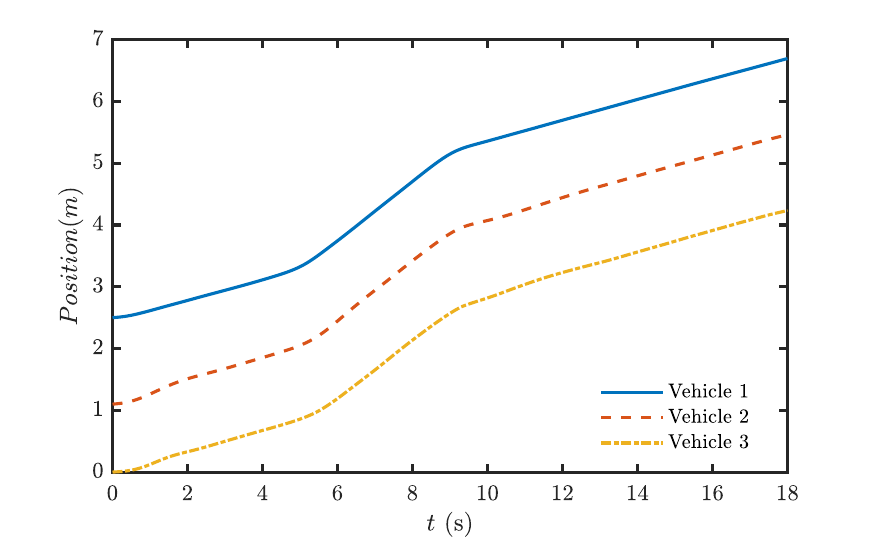}
    \centering
    \includegraphics[width=0.49\linewidth]{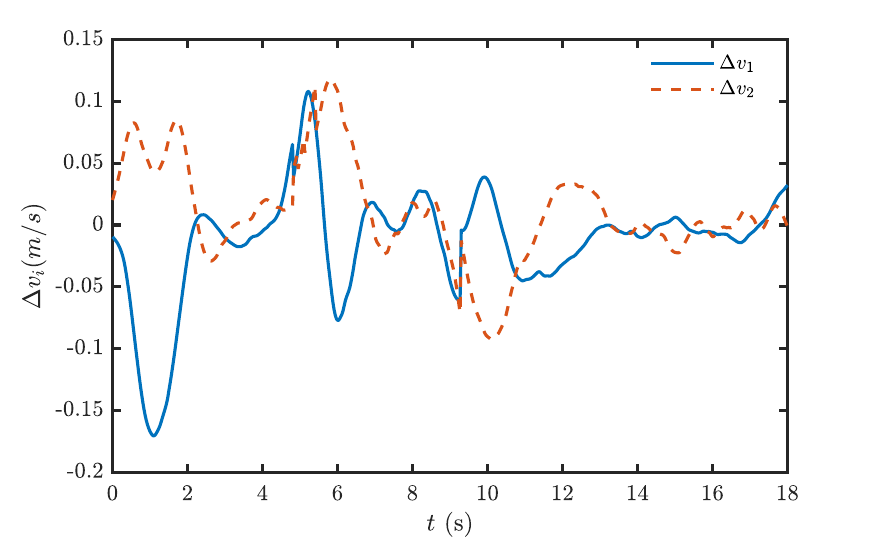}
    \includegraphics[width=0.49\linewidth]{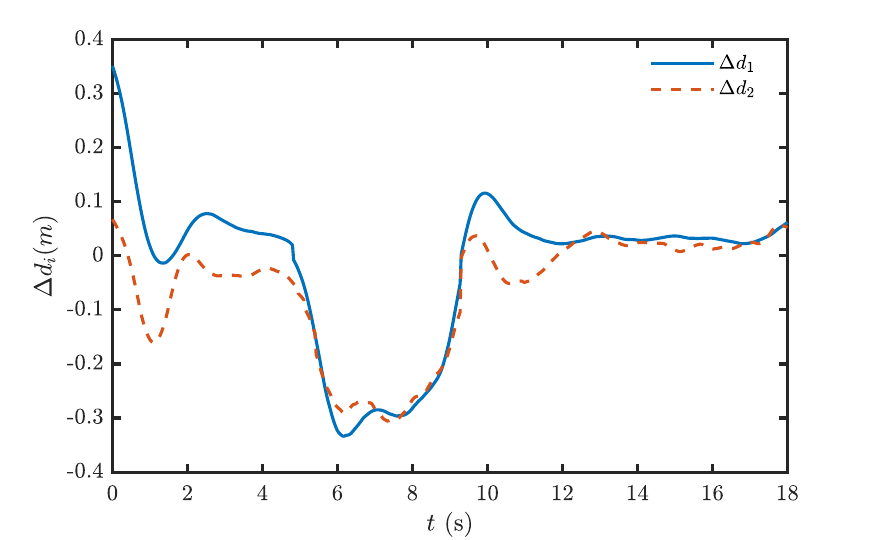}
    \caption{Velocity and spacing errors using the proposed scheme.}
    \label{rex2}
\end{figure}

\begin{table}[t]
\centering
\caption{Experimental performance comparison.}
\label{performance}
\small
\begin{tabular}{lccc}
\hline \hline
Performance indicator & Nominal & Proposed  \\
\hline
Velocity RMSE follower 1 (m/s) & 0.063042  & 0.042500  \\
Velocity RMSE follower 2 (m/s) & 0.056391  & 0.039722  \\
Spacing RMSE follower 1 (m) & 0.216918  & 0.128285  \\
Spacing RMSE follower 2 (m) & 0.135896  & 0.119276  \\
\hline \hline
\end{tabular}
\end{table}
 
\section{Conclusion}
This paper developed a residual learning-based control framework for vehicle platoons by combining nominal model-based control, disturbance observer compensation, and a REN-based residual policy. The nominal controller provides local disturbance attenuation and supports disturbance string stability, while the REN is trained offline to compensate for residual model mismatch. A gain-constrained REN projection, together with small-gain conditions, preserves the $\ell_2$ stability of the local feedback interconnection and the platoon. Experimental results show that the proposed compensation reduces spacing and velocity errors compared with the nominal controller alone, illustrating the performance benefits of the residual learning component. Future work will focus on relaxing the sufficient gain conditions to allow more flexible controller designs.

\bibliographystyle{IEEEtran}
\bibliography{bibliography}

@article{Silva2025,
   author = {Rafael Silva and Anh-Tu Nguyen and Thierry-Marie Guerra and Fernando Souza and Luciano Frezzatto},
   number = {4},
   journal = {IEEE Trans. Intell. Veh},
   pages = {2876-2891},
   title = {Switched Dynamic Event-Triggered Control for String Stability of Nonhomogeneous Vehicle Platoons With Uncertainty Compensation},
   volume = {10},
   year = {2025}
}

@article{Ploeg2014,
   author = {Jeroen Ploeg and Nathan Wouw and Henk Nijmeijer},
   number = {2},
   journal = {IEEE Trans. Control Syst. Technol.},
   month = mar,
   pages = {786-793},
   title = {Lp string stability of cascaded systems: Application to vehicle platooning},
   volume = {22},
   year = {2014}
}

@inproceedings{Min2023,
   author = {Youngjae Min and Spencer M. Richards and Navid Azizan},
   booktitle = {Proc. IEEE Conf. Decis. Control},
   pages = {6032-6037},
   title = {Data-Driven Control with Inherent Lyapunov Stability},
   year = {2023}
}

@inproceedings{Gaagai2023,
   author = {Ramzi Gaagai and Mattia Giaccagli and Joachim Horn},
   booktitle = {Proc. IEEE Conf. Decis. Control.},
   pages = {7759-7766},
   title = {Distributed Leader-Followers Constrained Platooning Control of Linear Homogeneous Vehicles},
   year = {2023}
}

@inproceedings{Ploeg2011,
   author = {Jeroen Ploeg and Bart  Scheepers and Ellen Nunen and Nathan Wouw and Henk Nijmeijer},
   booktitle = {IEEE Intell. Transp. Syst. Conf. Proc.},
   pages = {260-265},
   title = {Design and experimental evaluation of cooperative adaptive cruise control},
   year = {2011}
}

@article{Li2023,
   author = {Yongming Li and Yongyan Zhao and Shaocheng Tong},
   number = {11},
   journal = {IEEE Trans. Fuzzy Syst.},
   month = nov,
   pages = {3934-3943},
   title = {Adaptive Fuzzy Control for Heterogeneous Vehicular Platoon Systems With Collision Avoidance and Connectivity Preservation},
   volume = {31},
   year = {2023}
}

@article{Degrave2022,
    author = {Jonas Degrave and others},
   number = {7897},
   journal = {Nature},
   month = feb,
   pages = {414-419},
   pmid = {35173339},
   publisher = {Nature Research},
   title = {Magnetic control of tokamak plasmas through deep reinforcement learning},
   volume = {602},
   year = {2022}
}

@article{Furieri2024,
   author = {Luca Furieri and Clara  Galimberti and Giancarlo Ferrari-Trecate},
   journal = {IEEE Open J. Control Syst.},
   pages = {342-357},
   title = {Learning to Boost the Performance of Stable Nonlinear Systems},
   volume = {3},
   year = {2024}
}

@article{Revay2024,
   author = {Max Revay and Ruigang Wang and Ian R. Manchester},
   number = {5},
   journal = {IEEE Trans. Autom. Control},
   month = may,
   pages = {2855-2870},
   title = {Recurrent Equilibrium Networks: Flexible Dynamic Models with Guaranteed Stability and Robustness},
   volume = {69},
   year = {2024}
}

@article{Mahfouz2023,
   author = {Dalia Mahfouz and Omar Shehata and Elsayed Morgan},
   number = {6},
   journal = {IEEE Trans. Intell. Transp. Syst},
   pages = {5685-5704},
   title = {Development and Evaluation of a Unified Integrated Platoon Control System Architecture},
   volume = {24},
   year = {2023}
}

@article{Besselink2017,
   author = {Bart Besselink and Karl Johansson},
   number = {9},
   journal = {IEEE Trans. Autom. Controll},
   pages = {4376-4391},
   title = {String Stability and a Delay-Based Spacing Policy for Vehicle Platoons Subject to Disturbances},
   volume = {62},
   year = {2017}
}

@article{Feng2019,
   author = {Shuo Feng and Yi Zhang and Shengbo Eben Li and Zhong Cao and Henry X. Liu and Li Li},
   journal = {Annu. Rev. Control},
   pages = {81-97},
   title = {String stability for vehicular platoon control: Definitions and analysis methods},
   volume = {47},
   year = {2019}
}

@article{Luo20212,
   author = {Qianyue Luo and Anh-Tu Nguyen and James Fleming and Hui Zhang},
   number = {4},
   journal = {IEEE Trans. Veh. Technol},
   month = apr,
   pages = {2930-2944},
   title = {Unknown Input Observer Based Approach for Distributed Tube-Based Model Predictive Control of Heterogeneous Vehicle Platoons},
   volume = {70},
   year = {2021}
}

@article{Nunen2019,
  author={van Nunen, Ellen and Reinders, Joey and Semsar-Kazerooni, Elham and van de Wouw, Nathan},
  journal={IEEE Trans. Intell. Veh.}, 
  title={String Stable Model Predictive Cooperative Adaptive Cruise Control for Heterogeneous Platoons}, 
  year={2019},
  volume={4},
  number={2},
  pages={186-196},
  }

\end{document}